\newtheorem{lemma}{Lemma}
\newtheorem{proposition}{Proposition}
\def\approxcorrect{\checkmark\kern-1.1ex\raisebox{.89ex}{$\times$}}
\def\eqref#1{equation~\ref{#1}}
\def\1{\bm{1}}
\DeclareMathAlphabet{\mathsfit}{\encodingdefault}{\sfdefault}{m}{sl}
\SetMathAlphabet{\mathsfit}{bold}{\encodingdefault}{\sfdefault}{bx}{n}
\title{Differentiable Adaptive Kalman Filtering via Optimal Transport}
\author[1]{Yangguang He}
\author[2]{Wenhao Li\thanks{\texttt{whli@tongji.edu.cn}}}
\author[3]{Minzhe Li}
\author[4]{Juan Zhang}
\author[5]{Xiangfeng Wang}
\author[2,1]{Bo Jin\thanks{\texttt{bjin@tongji.edu.cn}}}
\affil[1]{Shanghai Research Institute for Intelligent Autonomous Systems, Tongji University}
\affil[2]{School of Computer Science and Technology, Tongji University}
\affil[3]{School of Aeronautics and Astronautics, Shanghai Jiaotong University}
\affil[4]{Shanghai Zhangjiang Institute of Mathematics}
\affil[5]{School of Computer Science and Technology, East China Normal University}
\date{}
\begin{document}
\maketitle

\begin{abstract}
Learning‐based filtering has demonstrated strong performance in non-linear dynamical systems, particularly when the statistics of noise are unknown. 
However, in real-world deployments, environmental factors, such as changing wind conditions or electromagnetic interference, can induce unobserved noise-statistics drift, leading to substantial degradation of learning-based methods. 
To address this challenge, we propose OTAKNet, the first online solution to noise-statistics drift within learning-based adaptive Kalman filtering. 
Unlike existing learning‐based methods that perform offline fine-tuning using batch pointwise matching over entire trajectories, OTAKNet establishes a connection between the state estimate and the drift via one-step predictive measurement likelihood, and addresses it using optimal transport.
This leverages OT’s geometry‑aware cost and stable gradients to enable fully online adaptation without ground‑truth labels or retraining.
We compare OTAKNet against classical model-based adaptive Kalman filtering and offline learning-based filtering. The performance is demonstrated on both synthetic and real-world NCLT datasets, particularly under limited training data. 
\end{abstract}

\section{Introduction}

State estimation lies at the heart of real‐time control and navigation applications.
Learning-based filters combine data-driven learning with model-based inference, representing a promising direction for state estimation.
Most notably, KalmanNet \cite{2022TSP_KalmanNet} and its extensions in reinforcement learning \cite{AAAI2025_Kalman_fusion_in_RL}, brain–machine interfaces \cite{NEURIPS2024_f1cf02ce}, multimodal data assimilation \cite{2024CVPR_deepgeneartivemutimodal}, human‐motion capture \cite{NEURIPS2024_KNet_human_motion}, temporal recommendation \cite{24WSDM_NeuFilter} and video super‐resolution \cite{2024ECCV_Kalman_videoface}. 

In practical deployments, environmental and operational factors, such as sensor temperature variations, mechanical wear, wind gusts, or electromagnetic interference, can alter the underlying Gaussian noise-statistics. 
This leads to a mismatch between the true noise covariances (NC) and those assumed during filtering, a phenomenon known as noise uncertainty, referred to in this paper as noise-statistics drift, which has been extensively studied in classical model-based adaptive Kalman filtering (AKF) \cite{huang2017novel, huang2020slide}, encompassing correlation methods, covariance-matching techniques, maximum-likelihood approaches, and Bayesian schemes \cite{mehra2003approaches}.

Unfortunately, learning-based approaches have rarely explored the issue of noise-statistics drift, typically addressing it by enlarging training datasets \cite{ni2024adaptive} and performing offline fine-tuning \cite{chen2025maml} via batch pointwise loss, i.e., pointwise matching.
This raises the question of how to extend current offline pointwise learning-based methods to enable online adaptation?

Unfortunately. First, using it only once offers no memory of noise drift. 
Second, single-sample gradients exhibit high variance and lack convergence guarantees \cite{2023TAC_Logarithmic_Regret}. Third, pointwise matching entangles model bias with noise, making it difficult to separately tune predictor and noise drift.
Finally, treating each time step independently breaks the recursive, causal structure central to Kalman filtering \cite{2017adaptiveKF_elec_power_system}.
Consequently, these issues render pointwise loss inadequate for online adaptation under noise-statistics drift.
This gives rise to two fundamental challenges: (1) how to formally characterize the impact of noise-statistics drift on state estimation, and (2) how to quantify the degree of noise-statistics drift.

To solve noise-statistic drift, we propose OTAKNet, a differentiable online adaptive Kalman filtering via optimal transport (OT). 
To address challenge (1), from a distributional perspective, the one-step predictive measurement likelihood is used to formally characterize the impact of noise-statistics drift on state estimation. Specifically, at each time step, a source distribution is constructed from the filter’s predictive estimates to approximate the one-step measurement likelihood under noise-statistics drift, while a target distribution is built from current observation and recent innovations to retain temporal information about the drift.
To address challenge (2), we quantify the degree of noise-statistics drift via the differentiable OT distance \cite{cuturi2013sinkhorn}, which captures the geometric discrepancy between the source and target distributions, shaped by temporal information about the drift. In contrast, alternatives such as KL divergence neglect the geometric structure of the sample space and often suffer from vanishing gradients under large noise excursions.

Overall, our contributions include: 
(1) We present the first online solution to noise-statistics drift within learning-based adaptive Kalman filtering, by proposing OTAKNet.
(2) To address noise-statistics drift, a connection between the state estimate and the drift via one-step predictive measurement likelihood is established via optimal transport.
(3) Empirically, our OTAKNet consistently outperforms both model-based and learning-based baselines on synthetic and real-world datasets, particularly under limited training data or during highly maneuvering scenarios.

\section{Related Work}

\paragraph{Model-based Adaptive Kalman Filters}

To address noise‐statistics drift, classical adaptive kalman filtering (AKF) methods perform online estimation of filter noise parameters based on various adaptation rules. In particular, by imposing inverse Wishart priors within a variational Bayesian (VB) framework, the state estimate, prediction‐error covariance, and measurement‐noise covariance matrices are inferred jointly \cite{huang2017novel}. 
However, this VBAKF approach strongly depends on a preselected nominal process noise covariance matrix (SNCM).
To enhance robustness, a slide window variational adaptive Kalman filter (SWVAKF) was proposed \cite{huang2020slide}, which incorporates a backward Kalman smoother (KS) over a finite window to iteratively refine the noise covariance estimates.

\paragraph{Learning–Based Adaptive Kalman Filters}

Several recent works have explored learning noise characteristics for improved filtering performance. EKFNet \cite{2024TSP_EKFNet} and \cite{NIPS2023_offline_SGD_DD_filter} both learn the unknown NC matrices from offline data, but are not specifically designed to mitigate noise‐statistics drift.

Building on the KalmanNet architecture, AKNet \cite{ni2024adaptive} introduces a compact hypernetwork to generate context-dependent modulation weights for noise adaptation. However, it relies on abundant supervised trajectory data and requires prior knowledge of noise-parameter ratios at deployment.
Meta‐learning aided methods such as MAML‐KalmanNet \cite{chen2025maml} employ a tailored pre-training scheme to enable rapid fine‐tuning with only a few target‐domain samples under noise drift.
However, these methods rely on fully labeled offline data to address noise‐statistics drift and thus cannot support real-time noise adaptation during filtering.

\paragraph{Optimal Transport for Filtering}

Optimal transport provides a principled and geometrically grounded framework for distribution alignment, and is a powerful tool for quantifying distributional differences. 
It has demonstrated remarkable success in unsupervised domain adaptation~\cite{2017_PAMI_UDA_OT, 2023_CVPR_UDA_framwork, 2024_AAAI_UDA_ImageRetrieval}.

Beyond these computer vision adaptation tasks, OT has been extended to dynamic state‑space estimation. For example, \cite{ICML2021_OT_RSPF} reformulates the resampling step in particle filtering as an entropy-regularized OT problem, yielding a differentiable filter with low‑variance gradient. 
\cite{ICML2024_NL_OTPF} models the Kalman predict‑update cycle itself as an OT map and designs an OT-based particle filter transporting prior particles to the posterior.  

Despite these advances, most learning‑based filtering methods are trained offline with fixed noise covariance assumptions. When noise statistics drift during deployment, this challenge can be formulated as a domain adaptation problem induced by distributional shift. However, OT‑based solutions for online adaptation in learning‑based Kalman filters remain to be explored.  

\section{Preliminaries}

\subsection{Adaptive Kalman Filter}

Adaptive Kalman Filter is a widely used technique for state estimation under uncertainty in the noise covariances. In this work, we focus on uncertainty arising from \textbf{noise‑statistics drift}. Consider the state‑space model (SSM):
\begin{subequations}
\setlength{\abovedisplayskip}{3pt}
\setlength{\belowdisplayskip}{3pt}
\begin{align}
  x_t &= \mathbf{f}(x_{t-1}) + w_t,\quad &w_t &\sim \mathcal{N}(0,\,\mathbf{Q}),\\
  y_t &= \mathbf{h}(x_t) + v_t,\quad &v_t &\sim \mathcal{N}(0,\,\mathbf{R}),
\end{align}
\end{subequations}
where \(x_t\in\mathbb{R}^n\) is the latent state at time \(t\), \(y_t\in\mathbb{R}^m\) the corresponding measurement, \(\mathbf{f}\) and \(\mathbf{h}\) are state‑evolution and measurement functions, and \(w_t\), \(v_t\) are Gaussian noises with unknown covariances \(\mathbf{Q}\) and \(\mathbf{R}\), respectively.

In classical AKF, one alternately estimates the noise covariances \(\hat{\mathbf{Q}}_t\) and \(\hat{\mathbf{R}}_t\), and then computes the posterior using the update equations with these estimated covariances. We now introduce a key building block of OTAKNet, the one‑step predictive likelihood of the measurement:
\begin{equation}
  p(y_t\mid y_{1:t-1})
  = \int p(y_t\mid x_t)\,p(x_t\mid y_{1:t-1})\,\mathrm{d}x_t.
\end{equation}
Under the Gaussian assumptions, the time‑update (prior) and likelihood take the form
\begin{subequations}
\begin{align}
  p(x_t\mid y_{1:t-1})
  &= \mathcal{N}\bigl(x_t;\,\hat x_{t\mid t-1},\,\Sigma_{t\mid t-1}\bigr), \label{equa_kf_predict_a}\\
  p(y_t\mid x_t)
  &= \mathcal{N}\bigl(y_t;\,\mathbf{h}(x_t),\,\hat{\mathbf{R}}\bigr), \label{equa_kf_update_b}\
\end{align}
\end{subequations}
where $\hat x_{t\mid t-1} = \mathbf{f}\bigl(\hat x_{t-1}\bigr),\,\Sigma_{t\mid t-1} = \mathbf{F}_t\,\Sigma_{t-1}\,\mathbf{F}_t^\top + \hat{\mathbf{Q}},\,\mathbf{F}_t = \nabla_x\mathbf{f}\bigl(\hat x_{t-1}\bigr)$. Substituting into the integral yields the closed‑form predictive measurement distribution:
\begin{equation}\label{equa_predictive_meas}
%\begin{align}
  p(y_t\mid y_{1:t-1})
  = \mathcal{N}\bigl(y_t;\,\mathbf{h}(\hat x_{t\mid t-1}),\,\mathbf{S}_{t\mid t-1}\bigr), %\label{equa_predictive_meas}
%\end{align}
\end{equation}
where \( \mathbf{S}_{t\mid t-1} = \mathbf{H}_t\,\Sigma_{t\mid t-1}\,\mathbf{H}_t^\top + \hat{\mathbf{R}},\,\mathbf{H}_t = \nabla_x\mathbf{h}\bigl(\hat x_{t\mid t-1}\bigr)\).    
\subsection{Optimal transport}

Optimal transport provides a geometry-aware approach for measuring distribution discrepancy. Given two probability measures \(\mu\) and \(\nu\) on a metric space \(\mathcal{X}\) and a cost function \(c(x,y)\), the Kantorovich OT problem \cite{kantor_ot} is  
\begin{equation}\label{Kantorovich OT}
    W_c(\mu,\nu)
  = \min_{\pi\in\Pi(\mu,\nu)}
    \int_{\mathcal{X}\times\mathcal{X}} c(x,y)\,\mathrm{d}\pi(x,y),
\end{equation}
where \(\Pi(\mu,\nu)\) denotes the set of all couplings with marginals \(\mu\) and \(\nu\). In the special case \(c(x,y)=\|x-y\|^2\), the resulting distance is the squared Wasserstein‑2 distance that respects the geometry of \(\mathcal{X}\) and captures both support and mass differences, making it well-suited for measuring distributional shifts.

Directly solving the OT problem (\ref{Kantorovich OT}) is computationally expensive. To enable efficient approximation, one can add an entropic regularization term \(\varepsilon>0\), yielding the so-called Sinkhorn distance \cite{cuturi2013sinkhorn}:
\begin{equation}\label{sinkhorn dis}
  W_{\varepsilon}(\mu,\nu)
  = \min_{\pi\in\Pi(\mu,\nu)}
    c(x,y)\,\mathrm{d}\pi(x,y)
    - \varepsilon\,h(\pi).
\end{equation}

The entropic regularizer $h(\pi) = \sum_{i,j} \pi_{ij}\,\ln \pi_{ij}$ yields an
an optimization problem (\ref{sinkhorn dis}) solvable via iterative Bregman projections \cite{2015iterative_entropic_ot}. 
In practice, the Sinkhorn algorithm \cite{cuturi2013sinkhorn} alternates updates to compute the regularized coupling \(\pi\) in \(O(n^2)\) time per iteration, and is fully differentiable with respect to distribution parameters. An Inexact Proximal point method for exact Optimal Transport problem (IPOT) algorithm \cite{2020UAI_IPOT}, refines this approach and converges to the exact Wasserstein distance. These properties make them practical for integration into deep learning pipelines and online adaptation schemes.  

\section{Method}

To address noise‐statistics drift, we propose OTAKNet, a differentiable, label-free, OT-based online adaptive Kalman filter. In essence, we characterize the impact of noise-statistics drift on state estimation at each time step via the one-step predictive measurement likelihood. 
We then mitigate the degree of noise-statistics drift by using the optimal transport plan to 'move' predictive likelihood. Over time, this adaptation converges towards the true measurement likelihood.
Specifically, we use two empirical distributions: a source distribution, derived from prior neural filter estimates, and a target distribution constructed from current observation and recent innovations to explicitly retain temporal information about the drift. 
The Wasserstein distance quantifies the degree of noise-statistics drift on the predictive measurement distribution.
To ensure real-time filtering and differentiability, we employ the IPOT algorithm to efficiently compute the OT loss and update parameters in each predict–update cycle.
The framework is shown in Fig.~\ref{fig1_method}.

\begin{figure*}[ht]
  \centering
  \includegraphics[width=0.9\linewidth]{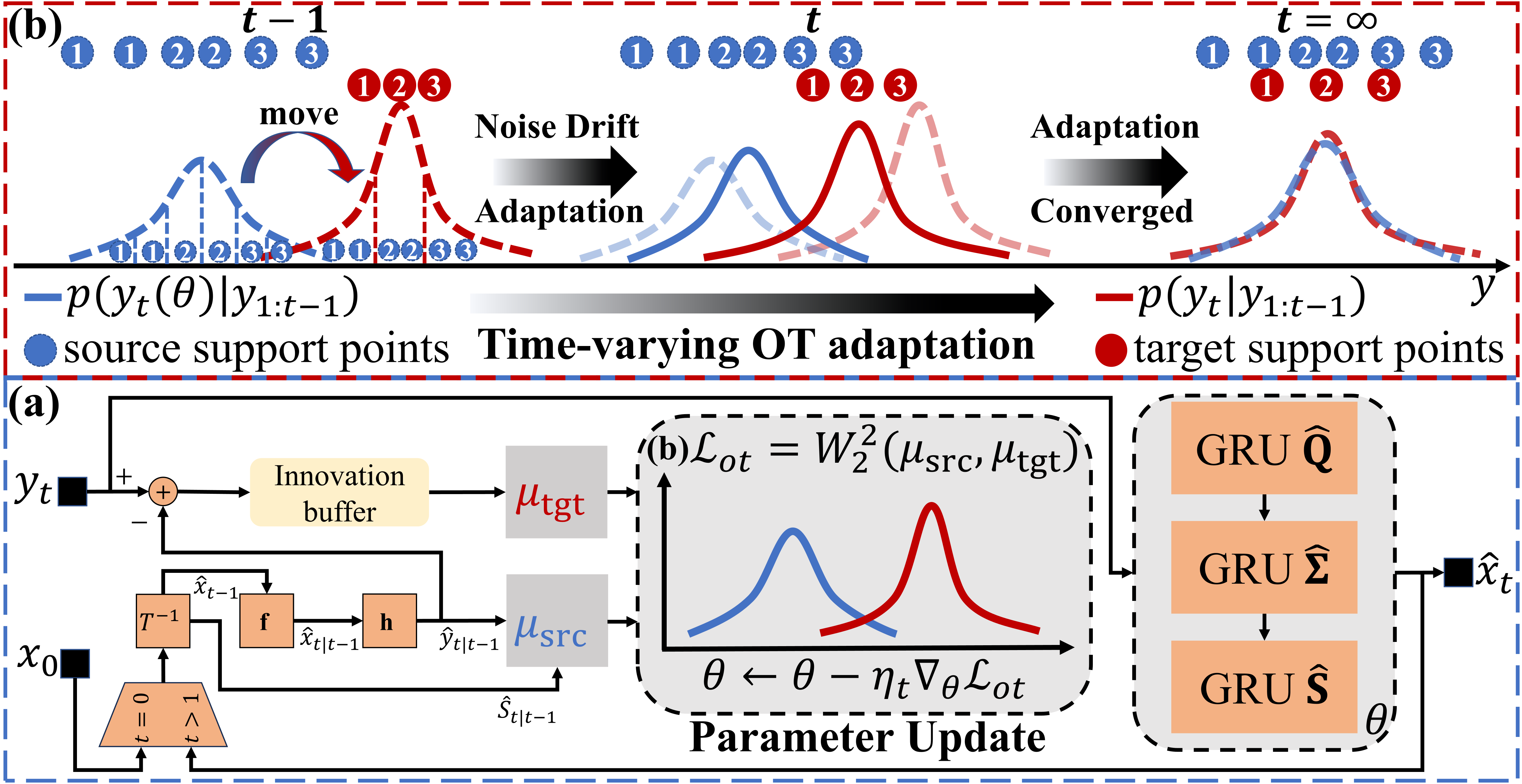}
  \caption{(a) OTAKNet filtering framework. At each time step \(t\), the prior defines a source distribution to approximate the one-step measurement likelihood under noise-statistics drift, while the current measurement and a window of past innovations define the target. The differentiable OT-based adaptation (illustrated in panel (b)) is used to update parameters \(\theta\), yielding the corrected estimate \(\hat x_t\).  
  (b) The source iteratively moves toward the target distribution. Over successive time steps, these two measures converge to the true predictive likelihood, enabling fully online label-free adaptation to noise-statistics drift.}
  \label{fig1_method}
\end{figure*}

\subsection{Problem Formulation for Noise-Statistics Drift}

In many real-world scenarios, true noise covariances evolve dynamically due to external factors such as wind gusts or electromagnetic interference, violating the fixed-noise assumption and degrading the performance of offline-trained learning-based kalman filtering.

During offline training, label-free (i.e., not using ground-truth states) learning-based filters (e.g., \cite{TSP_2024danse, 2024TSP_EKFNet}) aim to maximize the one-step predictive likelihood defined in Eq.\ref{equa_predictive_meas}. In practice, this is typically achieved by minimizing the mean square error between the network’s predicted measurement \(\hat y_t(\theta)\) and the true measurement \(y_t\):
\begin{equation}\label{equ_offline_loss}
  \min_{\theta} \sum_{t=1}^T \mathbb{E} \bigl\| \hat{y}_t(\theta) - y_t \bigr\|^2.
\end{equation}

This objective aligns estimated pointwise on complete offline trajectories. However, applying such pointwise matching loss in an online setting to handle noise-statistics drift introduces several issues.
First, single-sample gradients exhibit high variance and lack convergence guarantees \cite{2023TAC_Logarithmic_Regret}.
Second, treating each time step independently disrupts the recursive and causal structure inherent to Kalman filtering \cite{2017adaptiveKF_elec_power_system}.
Consequently, pointwise matching loss is inadequate for online adaptation under noise-statistics drift.

To overcome these limitations, unlike prior approaches that rely on pointwise estimates, our formulation operates in the distributional space. We establish a principled connection between noise-statistics drift and state estimation using the predictive measurement distribution in Eq.~\ref{equa_predictive_meas}. 
Optimal‐transport (OT) distances quantify the minimal cost of transforming one distribution into another while preserving the geometry of the measurement space, and they yield stable, non‐degenerate gradients. 
Accordingly, we adopt the Wasserstein-2 distance as our alignment criterion to detect and correct noise-statistics drift.
Our adaptive objective is then formulated as
\begin{equation}\label{equ_otloss}
  \theta_t^* = \arg\min_{\theta} \;
W_2^2\bigl( p(y_t(\theta) \mid y_{1:t-1}),\; p(y_t \mid y_{1:t-1}) \bigr)
\end{equation}
where \(W_{2}\) is the Wasserstein‑2 distance in the measurement space. 
This formulation enables label-free adaptation by aligning the predictive distribution, affected by noise-statistics drift, with the geometry-informed observation.

\subsection{Source Distribution Construction}

The source distribution characterizes how noise-statistics drift affects state estimation through the one-step predictive measurement likelihood distribution. At time \(t\), the offline trained KalmanNet provides the prior state mean \(\hat x_{t\mid t-1}\) and innovation covariance \(S_{t\mid t-1}\).  We approximate the predictive measurement distribution by Monte Carlo sampling. Concretely, we proceed as follows:

\begin{enumerate}
  \item Let \(\hat x_{t|t-1}(\theta)\) and \(S_{t|t-1}(\theta)\) be prior mean and error covariance of the learning-based filter under parameters $\theta$.
  \item Sample \(N\) independent particles using Eq.~\ref{equa_predictive_meas}
  \[
    z_{\mathrm{src}}^{(i)} \;\sim\;\mathcal{N}\bigl(\mathbf{h}(\hat x_{t|t-1}(\theta)),\,S_{t|t-1}(\theta)\bigr),
    \quad i=1,\dots,N.
  \]
  \item Define the \textbf{source} empirical measure
  \begin{equation}\label{equa_src dis}
      \mu_{\mathrm{src}} \;=\; \frac{1}{N}\sum_{i=1}^N \delta_{z_{\mathrm{src}}^{(i)}}.
  \end{equation}
\end{enumerate}

In this way, \(\mu_{\mathrm{src}}\) provides an empirical, parameterized approximation of the predictive measurement distribution based on the offline trained filters parameters $\theta$. 

\subsection{Target Distribution Construction}\label{subsec: target dis}

The target distribution is designed to encode temporal information about the drift. At time step \(t\), we observe only the current \(y_t\). Since a single pointwise cannot retain temporal information about the drift, we utilize the innovations
\begin{equation}\label{innovation}
  e_k = y_k - \mathbf{h}(\hat x_{k\mid k-1}), 
  \quad k = t-W+1,\dots,t,
\end{equation}
and maintain the sliding window \(\{e_{t-W+1},\dots,e_t\}\).  For each \(j=1,\dots,W\), we generate a pseudo‐measurement $ \tilde z_{\rm tgt}^{(j)} = y_t + e_{\,t-W+j}$,
effectively shifting the current observation by each past innovation to capture measurement noise variability. The resulting target empirical measure is
\begin{equation}\label{equa_target dis}
  \mu_{\rm tgt}
  = \frac{1}{W}\sum_{j=1}^W \delta_{\tilde z_{\rm tgt}^{(j)}}.
\end{equation}
This construction enriches the current observation with geometric context to retain temporal information about the drift in Eq. \ref{innovation} and simultaneously defines the target distribution for OT-based adaptation.

% \textbf{Proposition3.1}
% \label{prop:target-consistency}
% \textit{
% Under the assumptions that the innovation sequence $\{e_{t-W+1},\dots,e_t\}$ is i.i.d.\ with zero mean and covariance $\mathbf{S}_{t\mid t-1}$, that the observation map $\mathbf{h}$ is smoothly linearizable in a neighbourhood of $\hat x_{t\mid t-1}$, and that the sliding‐window size $W$ grows without bound, the empirical target distribution $\mu_{\rm tgt}$ in Equ. \ref{equa_target dis}
% converges in law to the prior predictive distribution
% $p(y_t\mid y_{1:t-1}).$
% }

\subsection{Differentiable OT Loss}

Let $z_{\mathrm{src}}^{(i)}$ and $\tilde z_{\mathrm{tgt}}^{(j)}$ denote source and target support points, respectively. The cost matrix is defined as $C_{ij} = \tfrac12 \bigl\lVert\,z_{\mathrm{src}}^{(i)}- \tilde z_{\mathrm{tgt}}^{(j)}\bigr\rVert^2$, and the OT problem in Eq.~\ref{sinkhorn dis} becomes
\begin{equation}
  \pi^* = \arg\min_{\pi\in\Pi(a,b)} \sum_{i=1}^N\sum_{j=1}^W \pi_{ij}\,C_{ij} 
    \;-\; \varepsilon\,h(\pi),
\end{equation}
with uniform marginals $a_i = 1/N$ and $b_j = 1/W$. To compute optimal transport plan $\pi^*$ efficiently, we employ the IPOT algorithm \cite{2020UAI_IPOT}, which converges to the exact Wasserstein distance with theoretical guarantee.
Following the IPOT procedure, define the kernel matrix \(G_{ij} = \exp(-C_{ij}/\epsilon)\). At each outer iteration \(k=1,2,\dots\), set \(Q = G \odot \pi^{(k)}\), where \(\odot\) denotes the Hadamard product. Then, we perform \(L\) inner iterations update (typically \(L=1\)) on the scaling vectors:
\begin{equation}
  a \leftarrow \frac{\mu}{Qb}, \qquad 
  b \leftarrow \frac{\nu}{Q^\top a}.
\end{equation}
The coupling is updated as $\pi^{(k+1)} \leftarrow \operatorname{diag}(a)\, Q\, \operatorname{diag}(b)$. 
All within PyTorch so that gradients \(\nabla_\theta\) propagate through each operation. We define the OT loss
\begin{equation}
  \mathcal{L}_{\mathrm{OT}}
  = \sum_{i=1}^N\sum_{j=1}^W \pi_{ij}\,C_{ij},
\end{equation}
which is fully differentiable with respect to \(\theta\).

\subsection{Online OT‐Driven Parameter Update}
The core of our method integrates a differentiable OT framework for addressing online noise-statistics drift in learning-based filters, resulting in a fully label-free, online noise adaptation mechanism. As the filter parameters are iteratively updated, the source and target distributions progressively align and converge toward the true predictive likelihood. 
The proposition and proof of \textit{online innovation covariance adaptive consistency} are provided in the Appendix \ref{Theoretical Analyze}.

To mitigate estimation variability at initialization when temporal information about the drift is limited, i.e., when the residual buffer is small, we employ a linear warm-up schedule for the learning rate:
\begin{equation}\label{equ_warmup}
  \eta_t = \min\!\Bigl(\eta,\;\frac{t}{W}\,\eta\Bigr),
\end{equation}
where \(W\) denotes the window length, $\eta$ denotes the initial learning rate. This schedule reduces initial estimation fluctuations and rapidly reaches the nominal learning rate. Algorithm \ref{alg:otpf} provides the detailed update procedure.  

\begin{algorithm}[tb]
    \caption{OT-based Adaptive Neural Kalman Filter}
    \label{alg:otpf}
     \textbf{Input:} Initial state $x_0$, observations $\{y_t\}_{t=1}^{T}$, system dynamics $\mathbf{f}(\cdot)$, measurement $\mathbf{h}(\cdot)$, offline trained neural filter with parameter $\theta$ \\
    \textbf{Output:} Estimated states $\{\hat x_t\}_{t=1}^T$ \\
    \textbf{Hyperparameters:} regularization $\epsilon$, inner iterations $K$, queue length $W$, learning rate $\eta_t$ \\
    \textbf{Initialize:} $\mathrm{Res} \leftarrow \mathrm{Queue}(\text{maxlen}=W)$
    load network parameters $\theta$
\begin{algorithmic}[1]
    \STATE $\hat x_{1}(\theta),\,\Sigma_{1\mid1}(\theta),\,\hat{\mathbf{Q}}_1(\theta),\,\hat{\mathbf{R}}_1(\theta) \leftarrow \texttt{OTAKNet}_{\theta}(y_{1})$
    \FOR{$t = 2$ \textbf{to} $T$}
        \FOR{$k = 1$ \textbf{to} $K$}
            \STATE {\scriptsize{\texttt{\# Prior prediction}}}
            \STATE $\hat x_{t \mid t-1} \leftarrow \mathbf{f}(\hat x_{t-1})$,\quad $ \Sigma_{t \mid t-1} \leftarrow  \mathbf{F}_t\,\Sigma_{t-1}\, \mathbf{F}_t^\top + \hat{\mathbf{Q}}_{t-1}$
            \STATE {\scriptsize{\texttt{\# Particle propagation }}}
            \STATE $\mathbf{S}_{t\mid t-1} = \mathbf{H}_t\,\Sigma_{t\mid t-1}\,\mathbf{H}_t^\top + \hat{\mathbf{R}}_{t-1}$
            \FOR{$i = 1$ \textbf{to} $N$}
                \STATE $y_{t}^i \sim \mathcal{N}\bigl(\mathbf{h}(\hat x_{t \mid t-1}),\,\mathbf{S}_{t\mid t-1})$
            \ENDFOR
            \STATE {\scriptsize{\texttt{\# Store residuals}}}
            \STATE $\mathrm{Res}.\text{insert} (y_{t} - \mathbf{h}(\hat x_{t \mid t-1}))$
            \STATE $\{\tilde{y}_t^j\}_{j=1}^W \gets \{ y_t + r \mid r \in \mathrm{Res} \}$
            \STATE {\scriptsize{\texttt{\# Construct source and target distributions}}}
            \STATE $\mu \leftarrow \tfrac{1}{N}\sum_{i=1}^N \delta_{y_{t}^i},\quad
                    \nu \gets \frac{1}{W} \sum_{j=1}^W \delta_{\tilde{y}_t^j}$
            \STATE Build cost matrix $C_{ij} = \tfrac12 \|y_{t}^i - \tilde{y}_t^j\|^2$
            \STATE Compute transport plan $P \leftarrow \mathrm{IPOT}(\mu, \nu, C, \epsilon)$
            \STATE Compute OT loss: 
            $\mathcal{L}_{ot} \leftarrow \sum_{i,j} P_{ij} \cdot C_{ij}$
            \STATE Update parameters: 
            $\theta \leftarrow \theta - \eta_t\,\nabla_{\theta}\,\mathcal{L}_{ot}$
        \ENDFOR
        \STATE {\scriptsize{\texttt{\# Inference}}}
        \STATE $ \hat x_{t},\,\Sigma_{t \mid t},\,\mathbf{Q}_t,\,\mathbf{R}_t \leftarrow \texttt{OTAKNet}_{\theta}(y_t)$
    \ENDFOR
\end{algorithmic}
\end{algorithm}

\section{Experiments}
We evaluate the performance of OTAKNet under noise-statistics drift on both synthetic and real-world datasets, including the University of Michigan North Campus Long-Term Vision and LIDAR (NCLT) dataset \cite{dataset_NCLT}, and compare it with model-based and learning-based filtering methods.
An ablation study isolates the contributions of the warm-up schedule and the optimal transport adaptation module. 

\subsection{Noise-Statistics Drift Setting}

Noise‐statistics drift refers to the mismatch between the true test-time noise covariances and the assumed covariances in the state-space model (SSM). 
In the synthetic experiments, we generate multiple test trajectories under different noise covariance regimes to emulate varying degrees of drift
while in the real-world NCLT dataset, the true covariances are unknown and subject to environmental variability.

For the synthetic dataset, we follow the setup of KalmanNet~\cite{2022TSP_KalmanNet} with diagonal noise covariance:
\begin{equation}\label{4.0.3}
    \mathbf{Q} = \mathrm{q}  ^ { 2 } \cdot \mathbf{I} , \mathbf{R} = \mathrm{r} ^ { 2 } \cdot \mathbf{I} , \nu \triangleq \frac{ \mathrm{q} ^ { 2 } } { \mathrm{r} ^ { 2 } } 
\end{equation}
where \(\nu\) is the process‐to‐measurement noise‐variance ratio.  Performance is measured by the mean square error (MSE) on a decibel (dB) scale, and results are reported as a function of the inverse measurement noise level \(1/\mathrm{r}^2\) also on [dB].

\subsection{Baselines}

1) Learning‐based methods: KalmanNet \cite{2022TSP_KalmanNet} and OTAKNet are offline trained in a fixed NC to isolate the effect of online adaptation. To ensure a fair comparison, the offline adaptive methods AKNet~\cite{ni2024adaptive} and MAML‑KalmanNet~\cite{chen2025maml} are instead offline trained across a dense set of noise ratios covering the test‐time regimes. 
\\
2) Model‐based methods: EKF \cite{EKF_book} uses the true covariance values and thus represents an oracle benchmark; VBAKF-PR \cite{huang2017novel} and SWVAKF \cite{huang2020slide} are initialized with the same nominal covariances as KalmanNet, relying on their internal adaptation rules to adjust NC online. 
Both in \textbf{Subsection} "Study on Synthetic Data: Lorenz Attractor" and "Study on Real World Data: NCLT", we set hyperparameters as follows.
For the \textbf{VBAKF-PR} baseline, we set the number of iterations to \(N_{\mathrm{iter}} = 100\), with parameters \(\tau_P = 3\) and \(\tau_R = 3\). The adaptation rate is set as \(\rho = 1 - \exp(-4)\).
For the \textbf{SWVAKF} baseline, we use a adaptation rate \(\rho = 1 - \exp(-5)\), and a sliding window length \(L = 20\).

\subsection{Study on Synthetic Data: Lorenz Attractor}\label{subsec: LOR}

\paragraph{State-space model for Lorenz Attractor}
Lorenz attractor is a three-dimensional chaotic solution arised from the Lorenz system of ordinary differential equations in continuous time \cite{Lorenz2004}. It captures chaotic, sensitive dynamics and serves as a benchmark for evaluating filtering algorithms in nonlinear systems \cite{ICML2024_NL_OTPF, 2022TSP_KalmanNet}. Accordingly, the system is discretized into a state-space model (SSM) as follows:
\begin{subequations}\label{equa_LOR_SSM}
\setlength{\abovedisplayskip}{3pt}
\setlength{\belowdisplayskip}{3pt}
\begin{align}
    {x}_t &= \exp
    \left( 
    \begin{bmatrix}
    -10 & 10 & 0 \\
    28 & -1 & -{x}_{t-1, 1} \\
    0 & {x}_{t-1, 1} & -\frac{8}{3}
    \end{bmatrix}
    \cdot
    \Delta
    \right)
    \cdot{x}_{t-1} + \mathbf{w}_t, \label{4.0.1.1} \\
    {y}_t &= \mathbf{h}({x}_t) + \mathbf{v}_t, \label{4.0.1.2}
\end{align}
\end{subequations}
where the system dynamics is a non-linear function of state $x_{t-1}$, $\mathbf{h}({x}_t)$ is identity (linear measurement), the process noise \( \mathbf{w}_t \) and measurement noise \( \mathbf{v}_t \) are Gaussian (\( \mathbf{w}_t \sim \mathcal{N}(0, \mathbf{Q}) \) , \( \mathbf{v}_t \sim \mathcal{N}(0, \mathbf{R}) \) ). Unless stated otherwise, the data was generated $\Delta=0.02$ sampling interval.

% baselin实现细节
\paragraph{Baseline implement details} 
For fair comparison, we standardize the offline training of all methods under comparable noise covariance settings. 
For the learning-based filters, AKNet is offline trained under four discrete noise ratios \(\nu\in\{-10,0,10,20\}\,\mathrm{dB}\), corresponding to \(1/\mathrm{r}^2\) values of \(\{0.1,1,10,100\}\), while MAML-KalmanNet is trained over a dense grid of 28 noise ratios \(\nu\) ranging from -30 to 30 dB.  By contrast, the KalmanNet backbone in OTAKNet is offline trained only once at the nominal ratio \(\nu=0\) dB (\(1/\mathrm{r}^2=0\) dB) using 1000 trajectories, relying on its OT-driven online adaptation to adapt to all other drift scenarios.
For the model-based filters, we initialized the noise covariance with the same settings as OTAKNet. 

The LOR dataset for OTAKNet an KalmanNet consists of 1000 training trajectories, 100 validation trajectories, and 100 test trajectories. 
During offline training, the model is trained for 2,000 iterations with mini-batches of 64 samples.
During online testing deployment, the OTAKNet model performs test-time adaptation via the Adam optimizer with a learning rate of \(1.8 \times 10^{-3}\) and \(L_2\) weight decay of \(1 \times 10^{-3}\), using a sliding adaptation window \(W = 20\).

\begin{table}[ht]
  \setlength{\abovedisplayskip}{3pt}
  \setlength{\belowdisplayskip}{3pt}
  \centering
  % \resizebox{\columnwidth}{!}{%
  \setlength{\tabcolsep}{3.0mm}
  \begin{tabular}{@{}l *{5}{c}@{}}
    \toprule
    $1/\mathrm{r}^2\,[\mathrm{dB}]$ &
    \cellcolor[HTML]{E2E6E1}\textbf{-10}$\downarrow$ &
    \cellcolor[HTML]{E2E6E1}\textbf{0}$\downarrow$ &
    \cellcolor[HTML]{E2E6E1}\textbf{10}$\downarrow$ &
    \cellcolor[HTML]{E2E6E1}\textbf{20}$\downarrow$ &
    \cellcolor[HTML]{E2E6E1}\textbf{30}$\downarrow$ \\
    \midrule
    \makecell[c]{%
      \begin{tabular}{@{}l@{}l@{}}
        \makebox[2.5cm][l]{OKF} & \shortstack{$\hat{\mu}$\\$\hat{\sigma}$}
      \end{tabular}
    } &
      \makecell[c]{9.64\\$\pm0.29$}   & \makecell[c]{-0.43\\$\pm0.34$} &
      \makecell[c]{-10.00\\$\pm0.35$} & \makecell[c]{-20.40\\$\pm0.36$} &
      \makecell[c]{-30.35\\$\pm0.33$} \\
    \midrule
    \makecell[c]{%
      \begin{tabular}{@{}l@{}l@{}}
        \makebox[2.5cm][l]{VBAKF-PR} & \shortstack{$\hat{\mu}$\\$\hat{\sigma}$}
      \end{tabular}
    } &
      \makecell[c]{42.66\\$\pm4.03$}  & \makecell[c]{16.20\\$\pm3.56$} &
      \makecell[c]{-0.90\\$\pm0.91$}  & \makecell[c]{-11.75\\$\pm0.96$} &
      \makecell[c]{-18.19\\$\pm2.73$} \\
    \midrule
    \makecell[c]{%
      \begin{tabular}{@{}l@{}l@{}}
        \makebox[2.5cm][l]{SWVAKF} & \shortstack{$\hat{\mu}$\\$\hat{\sigma}$}
      \end{tabular}
    } &
      \makecell[c]{15.11\\$\pm1.60$}  & \makecell[c]{2.72\\$\pm0.70$}  &
      \makecell[c]{-5.68\\$\pm0.89$}  & \makecell[c]{-13.46\\$\pm1.06$} &
      \makecell[c]{-22.09\\$\pm1.13$} \\
    \midrule
    \makecell[c]{%
      \begin{tabular}{@{}l@{}l@{}}
        \makebox[2.5cm][l]{AKNet} & \shortstack{$\hat{\mu}$\\$\hat{\sigma}$}
      \end{tabular}
    } &
      \makecell[c]{21.14\\$\pm2.11$}  & \makecell[c]{7.15\\$\pm0.53$}  &
      \makecell[c]{-3.04\\$\pm0.53$}  & \makecell[c]{-13.10\\$\pm0.49$} &
      \makecell[c]{-23.08\\$\pm0.49$} \\
    \midrule
    \makecell[c]{%
      \begin{tabular}{@{}l@{}l@{}}
        % \makebox[2.5cm][l]{\shortstack[l]{MAML-\\KalmanNet}} & 
        \makebox[2.5cm][l]{\shortstack[l]{MAML-\\KalmanNet}} & \shortstack{$\hat{\mu}$\\$\hat{\sigma}$}
      \end{tabular}
    } &
      \makecell[c]{22.46\\$\pm1.70$}  & \makecell[c]{9.75\\$\pm0.81$}  &
      \makecell[c]{-0.41\\$\pm0.73$}  & \makecell[c]{-10.45\\$\pm0.74$} &
      \makecell[c]{-19.56\\$\pm1.31$} \\
    \midrule
    \makecell[c]{%
      \begin{tabular}{@{}l@{}l@{}}
        \makebox[2.5cm][l]{KalmanNet} & \shortstack{$\hat{\mu}$\\$\hat{\sigma}$}
      \end{tabular}
    } &
      \makecell[c]{15.26\\$\pm1.31$}  & \makecell[c]{3.46\\$\pm0.41$}  &
      \makecell[c]{-6.54\\$\pm0.42$}  & \makecell[c]{-16.35\\$\pm0.42$} &
      \makecell[c]{-24.63\\$\pm0.94$} \\
    \midrule
    \makecell[c]{%
      \begin{tabular}{@{}l@{}l@{}}
        \makebox[2.5cm][l]{OTAKNet} & \shortstack{$\hat{\mu}$\\$\hat{\sigma}$}
      \end{tabular}
    } &
      \makecell[c]{\textbf{11.83}\\${\pm0.79}$} &
      \makecell[c]{\textbf{2.28}\\${\pm0.38}$}  &
      \makecell[c]{\textbf{-8.77}\\${\pm0.35}$} &
      \makecell[c]{\textbf{-18.78}\\${\pm0.42}$} &
      \makecell[c]{\textbf{-26.22}\\${\pm1.11}$} \\
    \bottomrule
  \end{tabular}
  \caption{MSE [dB] under the noise‐statistics drift scenario}
  \label{table1_LOR_mse}
\end{table}

% 数值结果
\paragraph{Numerical results} 
To evaluate robustness under noise‐statistics drift, we introduce five levels of static drift by varying the true measurement noise \(1/\mathrm{r}^2\) over \(\{30, 20, 10, 0, -10\}\) [dB] to simulate increasing mismatch between assumed and true covariances. For each noise setting, 100 independent test trajectories, i.e., 100 Monte Carlo runs, are generated, each with \(T=100\) time steps.
\\
\textbf{Static drift performance:} 
Static drift performance refers to average MSE across all time steps under noise‐statistics drift. Table \ref{table1_LOR_mse} presents the MSE (in [dB]) at the five drift levels. As drift increases, OTAKNet consistently outperforms both model-based and learning‑based approaches, demonstrating its ability to adapt to noise‐statistics drift.
\\
\textbf{Dynamic drift performance:} 
Dynamic drift performance examines the MSE over each time step as the filter adapts online to a noise drift scenario. Fig. \ref{fig2:LOR_time_mse} shows the MSE curves for \(1/\mathrm{r}^2=20\) [dB] over \(T=100\) steps. Within 25 timesteps, OTAKNet self‐adjusts and approaches the performance of an EKF configured with the true covariances, highlighting its rapid adaptation capability. 

\begin{figure}[ht]
  \centering
  \includegraphics[width=0.8\linewidth]{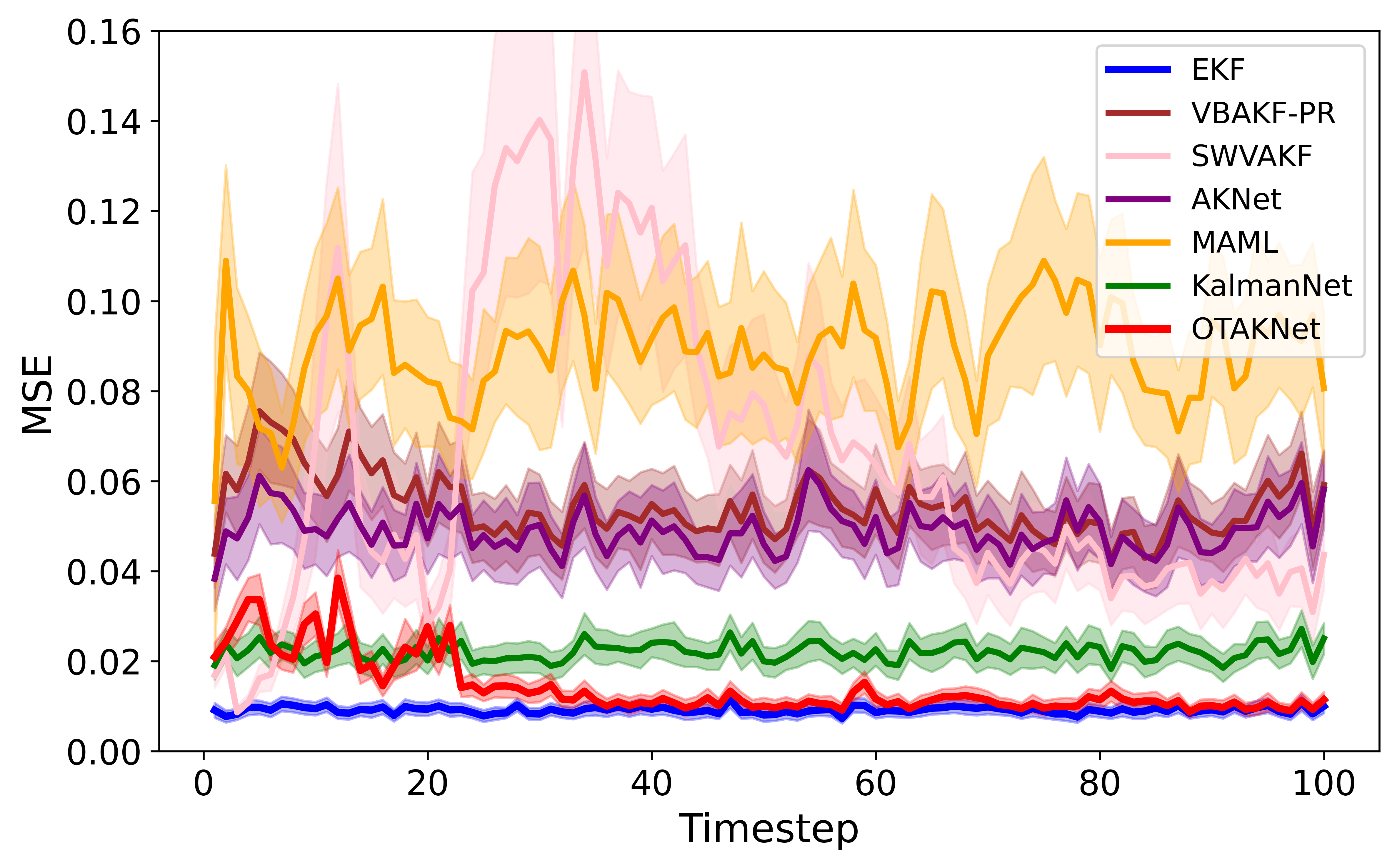}
  \caption{MSE curves over 100 timesteps on synthetic Lorenz Attractor data under the noise shift scenario (\(1/\mathrm{r}^2 = 20\) dB). OTAKNet progressively adapts, converging toward EKF performance with true noise covariances.}
  \label{fig2:LOR_time_mse}
\end{figure}

\begin{figure}[ht]
  \centering
  \includegraphics[width=0.8\linewidth]{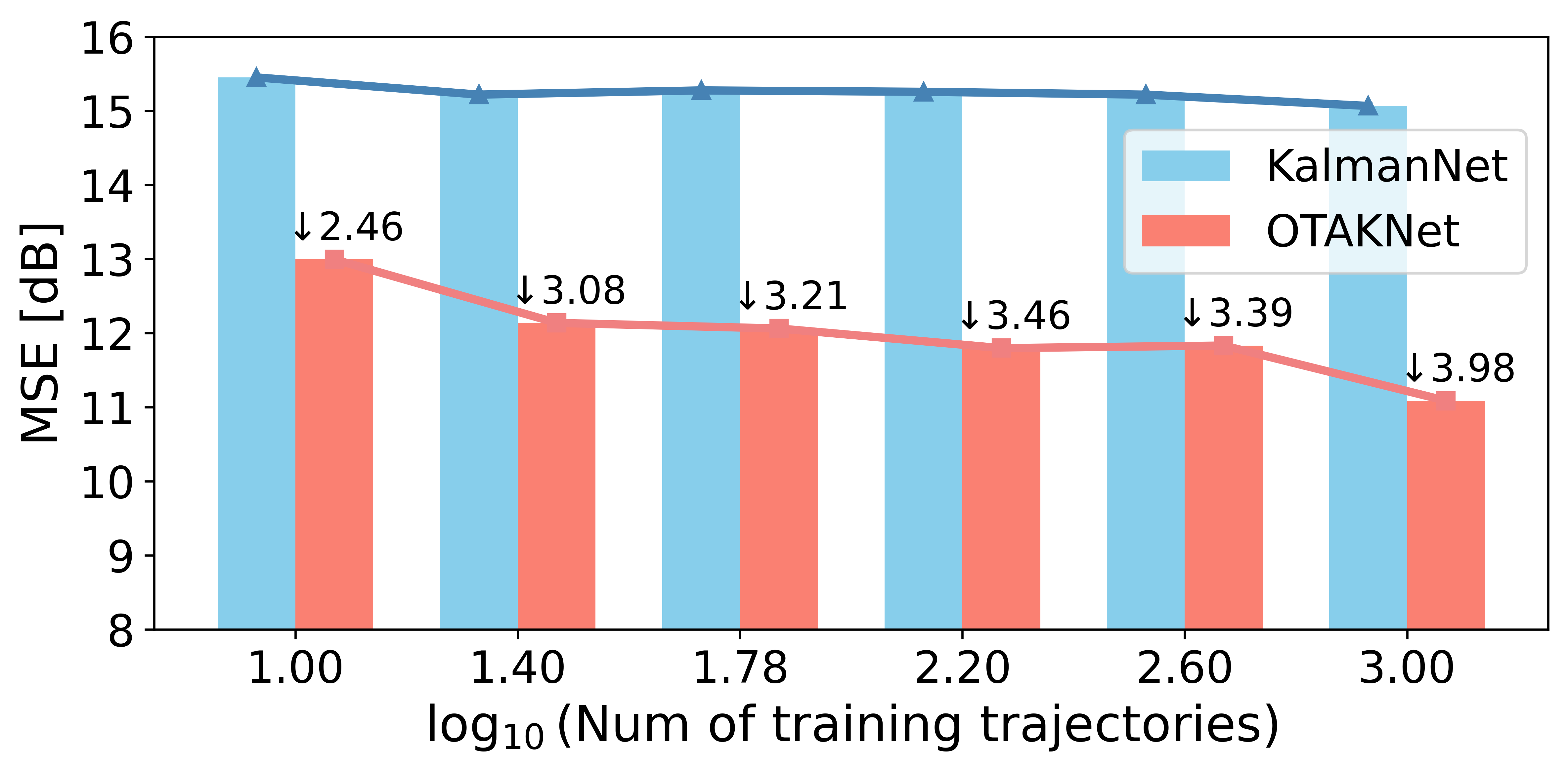}
  \caption{Effect of increasing the number of offline training trajectories in Lorenz Attractor.}
  \label{fig3:LOR_training trajectories}
\end{figure}

\subsection{Study on Real World Data: NCLT}

\paragraph{State-space model for NCLT}

In practical deployments, environmental changes often induce latent drift in noise statistics.
We evaluate OTAKNet on the NCLT dataset, collected using a Segway mobile robot at the University of Michigan North Campus.
This study focuses on localizing the robot by fusing IMU, odometry, and GPS measurements.
We define the five‐dimensional state vector $x_t = [\,x_t,\;y_t,\;v_{x,t},\;v_{y,t},\;\theta_t\,]^\top$ consists of the planar position \((x_t,y_t)\), the Cartesian velocity components \((v_{x,t},v_{y,t})\) computed from the wheel‐odometry speed command \(v_{c,t}\), and the heading angle \(\theta\) measured by the onboard IMU. The discrete‐time system dynamics are modeled as
\begin{subequations}\label{eq:ssm_nclt}
\setlength{\abovedisplayskip}{3pt}
\setlength{\belowdisplayskip}{3pt}
\begin{align}
  x_t &= 
  \begin{bmatrix}
    x_{t-1} + \Delta \,v_{c,t} \cos(\theta) \\[3pt]
    y_{t-1} + \Delta \,v_{c,t} \sin(\theta) \\[3pt]
    v_{c,t} \cos(\theta) \\[3pt]
    v_{c,t} \sin(\theta) \\[3pt]
    \theta_{t}
  \end{bmatrix}
  +
  w_t, \label{eq:ssm_nclt_dyn}\\
  y_t &= 
  \begin{bmatrix}
    1 & 0 & 0 & 0 & 0 \\[3pt]
    0 & 1 & 0 & 0 & 0
  \end{bmatrix}
  x_t
  + v_t, \label{eq:ssm_nclt_meas}
\end{align}
\end{subequations}
where \(w_t\sim\mathcal{N}(0,\mathbf{Q})\) accounts for unmodeled perturbations in the motion model, and \(v_t\sim\mathcal{N}(0,\mathbf{R})\) reflects the uncertainty in GPS observations. The sampling interval $\Delta=1$. 

\paragraph{Baseline implement details}
In real-world deployments, the number of trajectories required by learning-based filters to achieve satisfactory performance is unknown, and collecting sufficient data may be infeasible. To evaluate the robustness of OTAKNet, we consider two offline training regimes, both followed by online fine-tuning:
1) \textbf{Full} training: all 13 labeled trajectories are used for offline training.
2) \textbf{Limited} training: only 3 trajectories are used for offline training.
Notably, AKNet and MAML‑KalmanNet require labeled trajectories for offline fine‑tuning, which may be impractical in real-time deployments. Consequently, our evaluation focuses on online adaptive kalman methods.

The process noise covariance \(\mathbf{Q}\) is defined as a diagonal matrix with stronger uncertainty in the planar position components and minimal perturbation in velocity and heading:
\[
\mathbf{Q} = \mathrm{diag}(1.0,\,1.0,\,0.001,\,0.001,\,0.001),
\]
corresponding to the states \((x,\,y,\,\dot{x},\,\dot{y},\,\theta)\), respectively. The GPS measurement noise is assumed to be isotropic with standard deviation \(10\,\mathrm{m}\), resulting in the covariance:
\[
\mathbf{R}_{\mathrm{GPS}} = 10^2 \cdot \mathbf{I}_2.
\]

\paragraph{Numerical results}
We test on a long trajectory recorded on 2013‑04‑05 by a Segway robot, with ground-truth from high-precision RTK GPS and observations including IMU, wheel odometry, and low-precision GPS.
We extract a 4000-step continuous sequence aligned to RTK-GPS ground truth and divide it into 20 non-overlapping trajectories of 200 steps each: 13 for training, 3 for validation, and 4 for testing. More details please refer to Appendix.
\\
\textbf{Static drift performance:} 
Table \ref{table2_nclt_mse} presents MSE in dB for two different training regimes. 
The results show that VBAKF outperforms the fixed-noise EKF, suggesting that adaptively tuning noise covariance is effective, and environmental variations may lead to latent noise-statistics drift.
OTAKNet demonstrates superior performance of model-based methods on real-world data.
Furthermore, under limited training, the proposed OTA framework mitigates KalmanNet’s degradation under limited training.
\\
\textbf{Dynamic drift performance:} 
Fig. \ref{fig:nclt_two_panel} shows the MSE curves of the filtering process on the NCLT dataset over \(T=200\) steps.
For most of the time, OTAKNet adapts noise statistics in real time and outperforms other online methods, even under limited training. Besides, at the beginning and end of the estimated trajectory in Fig.~\ref{fig:nclt_NCLT_trajectory_comparison_xy}, the two maneuvers performance highlights OTAKNet’s ability to adapt to abrupt dynamic changes and reduce estimation errors.

\begin{table}
  \centering
  % \resizebox{\columnwidth}{!}{%
  \setlength{\tabcolsep}{0.35mm}
  \begin{tabular}{@{}l *{5}{c}@{}}
    \toprule
     & \textbf{EKF} & \textbf{VBAKF} & \textbf{SWVAKF} & \textbf{KalmanNet} & \textbf{OTAKNet} \\
    \midrule
    \cellcolor[HTML]{E2E6E1}Full
      & \makecell[c]{7.51\\\(\pm 2.55\)}
      & \makecell[c]{7.12\\\(\pm 2.42\)}
      & \makecell[c]{8.45\\\(\pm 1.75\)}
      & \makecell[c]{6.35\\\(\pm 2.15\)}
      & \makecell[c]{\textbf{6.10}\\\(\pm 2.84\)} \\
    \midrule
    \cellcolor[HTML]{E2E6E1}Limited
      & – & – & – 
      & \makecell[c]{9.40\\\(\pm 1.88\)}
      & \makecell[c]{\textbf{7.10}\\\(\pm 3.03\)} \\
    \bottomrule
  \end{tabular}
  % \begin{flushleft}
  %   \footnotesize
  %   \(^*\) Limited training uses only three trajectories; model‐based filters (EKF/VBAKF/SWVAKF) are not affected.
  % \end{flushleft}
  \caption{MSE [dB] on the NCLT dataset}
  \label{table2_nclt_mse}
\end{table}

\begin{figure}
  \centering
  \begin{subfigure}[b]{0.8\linewidth}
    \centering
    \includegraphics[width=\linewidth]{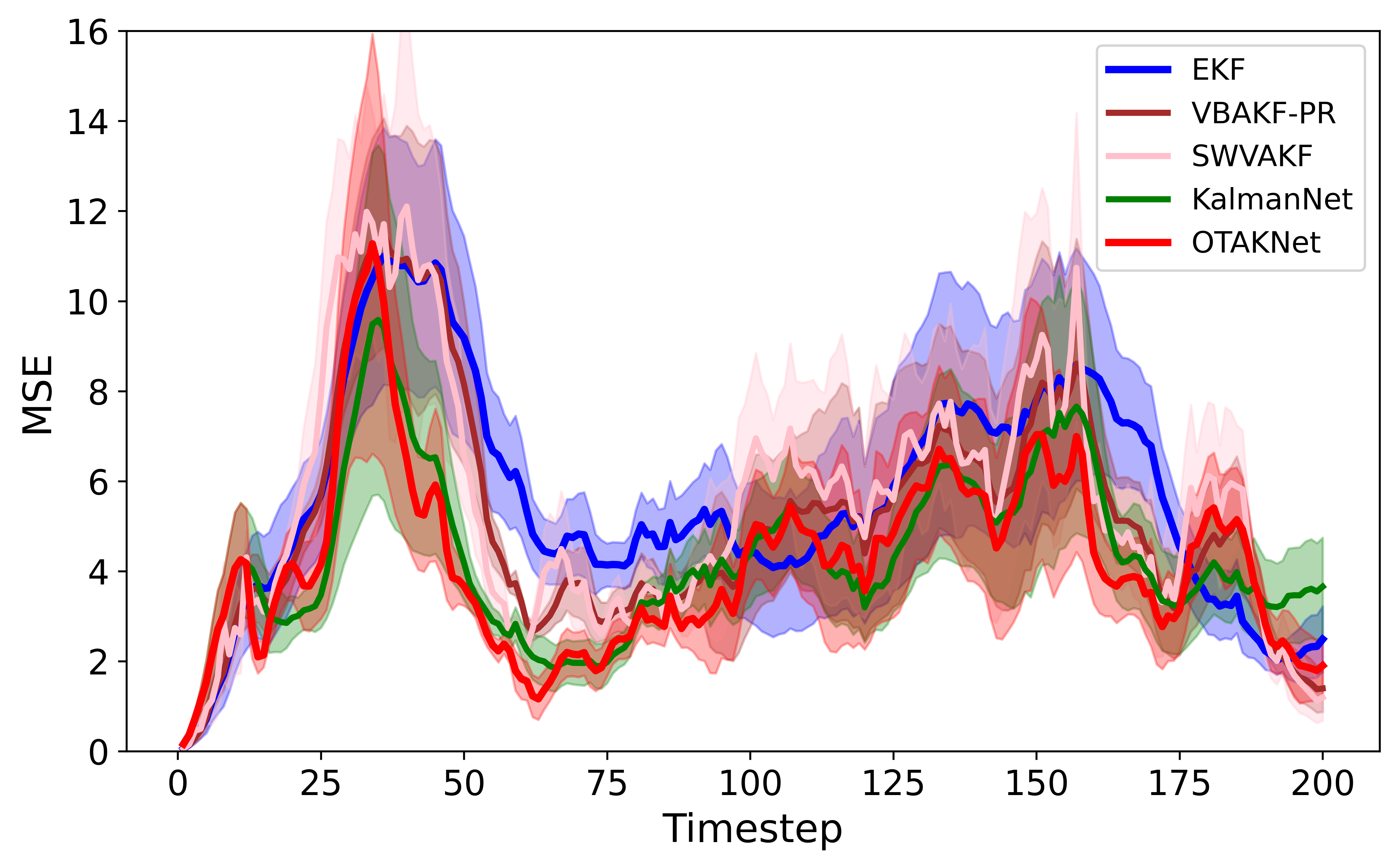}
    \caption{MSE curves under full training.}
    \label{fig:nclt_mse_full}
  \end{subfigure}
  \begin{subfigure}[b]{0.8\linewidth}
    \centering
    \includegraphics[width=\linewidth]{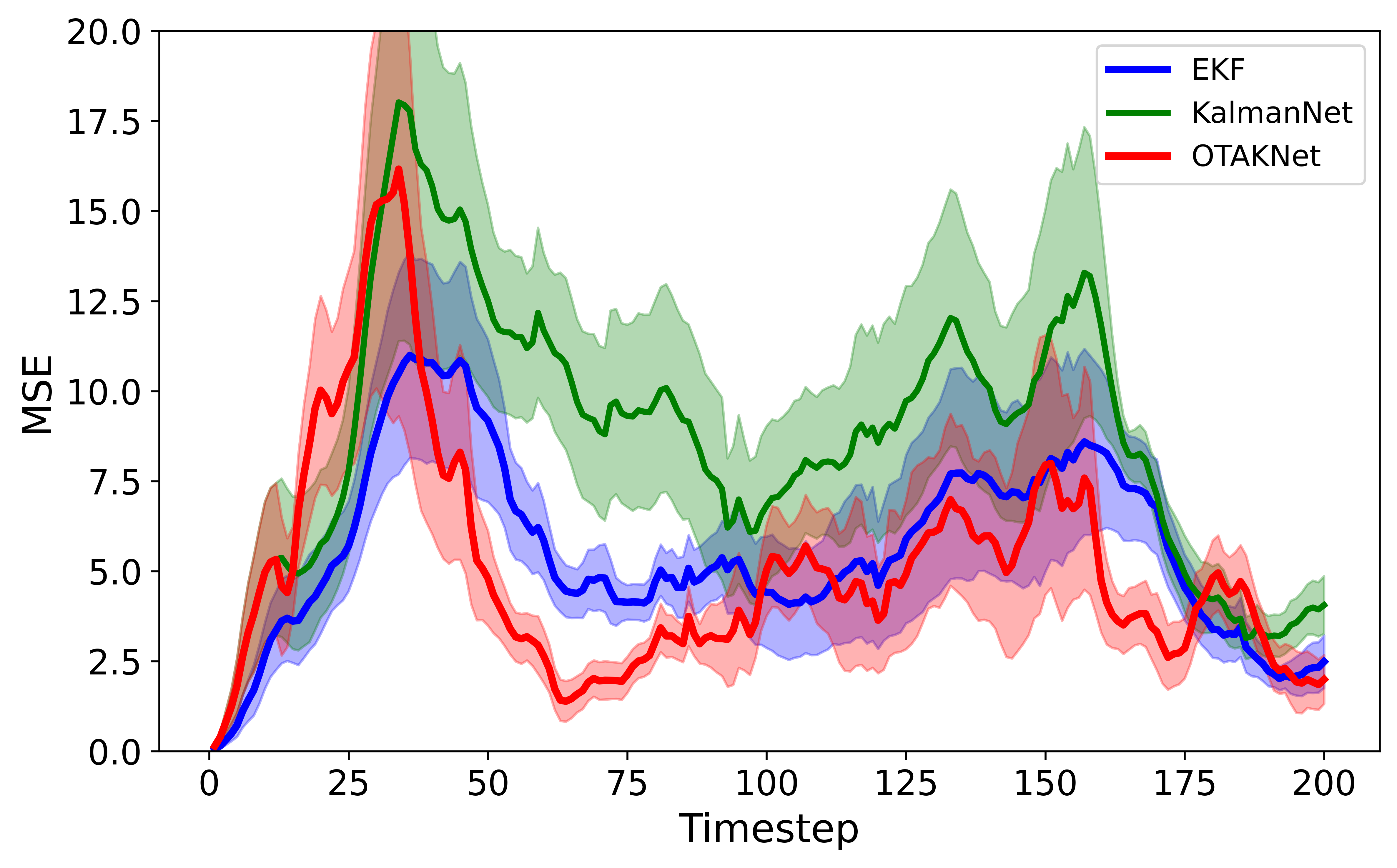}
    \caption{MSE curves under limited training.}
    \label{fig:nclt_mse_limited}
  \end{subfigure}

  \caption{MSE curves of online adaptive filters on the NCLT dataset under (a) full and (b) limited training.}
  \label{fig:nclt_two_panel}
\end{figure}
% \begin{subfigure}[b]{0.9\linewidth} % 第二张略缩小
%     \centering
%     \includegraphics[width=\linewidth]{4.5NCLT_trajectory_comparison_xy.png}
%     \caption{Estimated trajectory from NCLT dataset session with date 2013-04-05.}
%     \label{fig:nclt_trajectory}
%   \end{subfigure}
\begin{figure}
  \centering
  \includegraphics[width=0.8\linewidth]{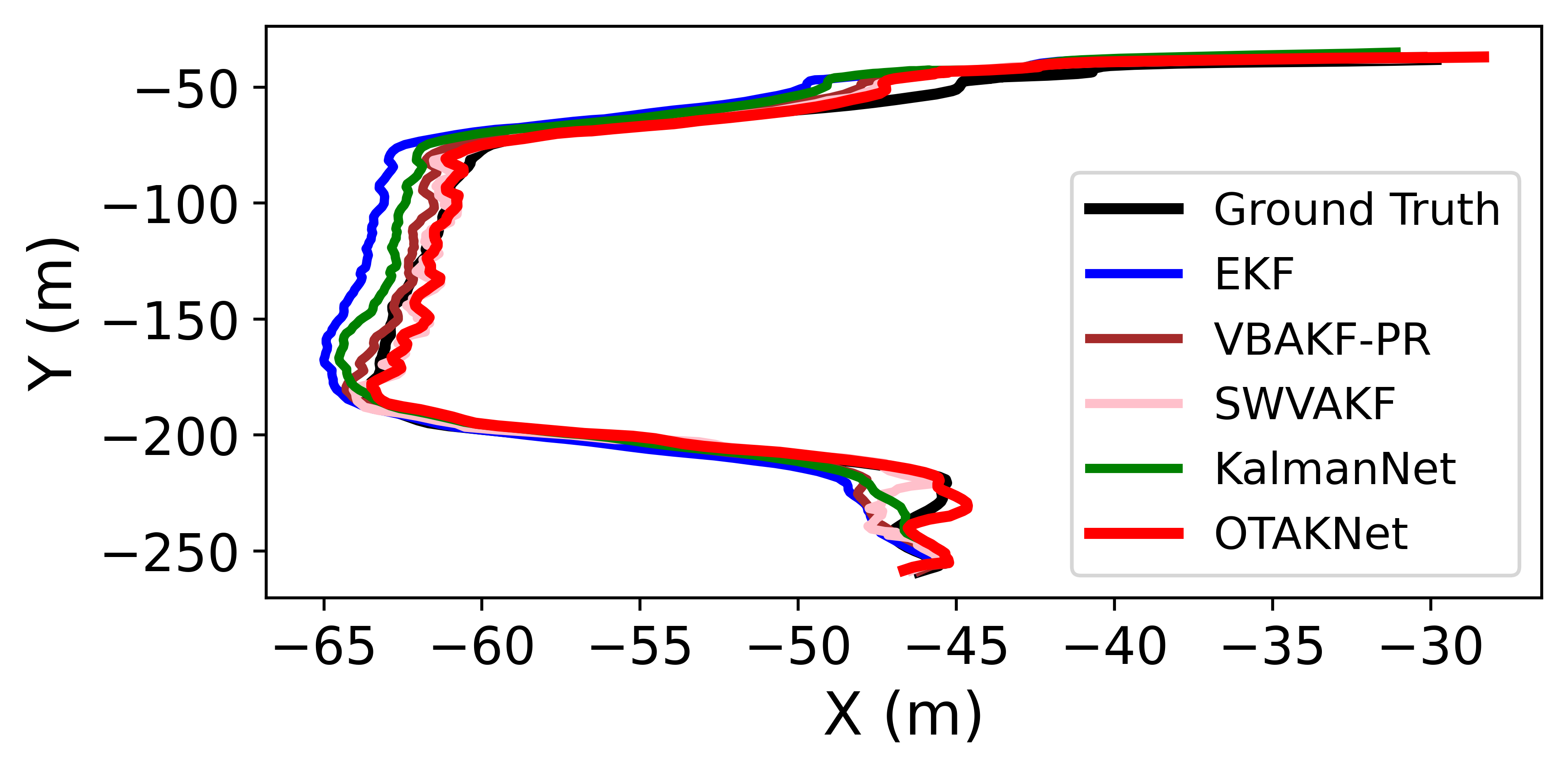}
  \caption{Estimated trajectory from NCLT dataset session.}
  \label{fig:nclt_NCLT_trajectory_comparison_xy}
\end{figure}

\begin{figure}
  \centering
  \includegraphics[width=0.8\linewidth]{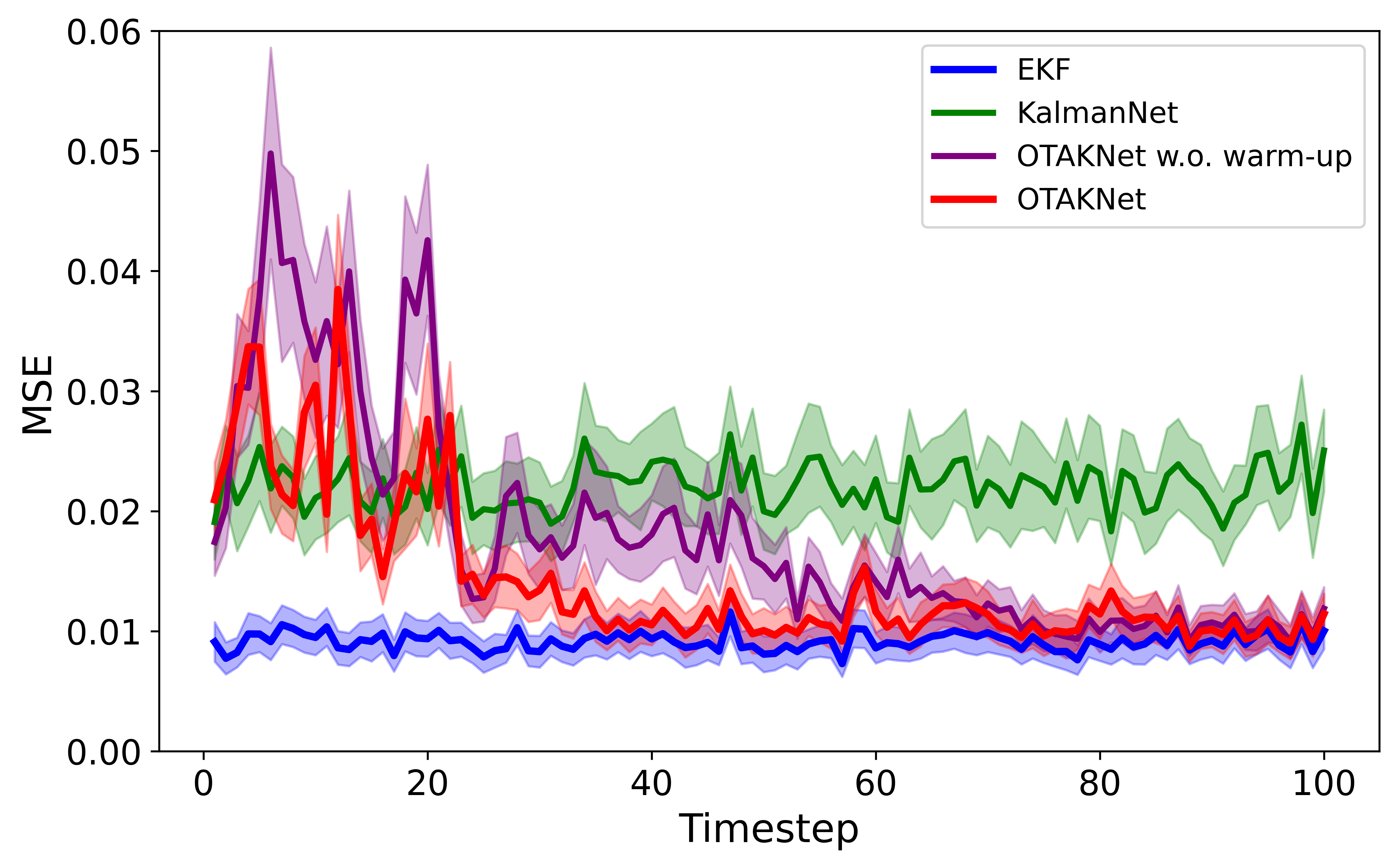}
  \caption{Linear warm‐up schedule for the learning rate. During the first \(W\) steps, the learning rate increases linearly to the base value \(\eta\), which helps mitigate estimation variability when temporal information about the drift is limited.}
  \label{fig:warmup_schedule}
\end{figure}

\subsection{Ablation Studies}

Table \ref{table_ablation_mse} reports ablation studies on synthetic and NCLT data to assess the contribution of each module.
In column 2 (w.o. warm-up in Eq.\ref{equ_warmup}) and 3 (w.o. OT in Eq.\ref{equ_otloss}, where the target distribution degenerates to a single point without windowed history), both modifications lead to increased MSE across all datasets, confirming the effectiveness of the OT-based loss and warm-up schedule in addressing noise-statistics drift.
Moreover, the analysis of MSE curves in Fig. \ref{fig:warmup_schedule} shows that OTAKNet exhibits instability in the early steps, possibly caused by limited residual history in the target distribution. The warm-up module mitigates this effect by stabilizing the adaptation process during initial filtering.

\begin{table}[H]
  \centering
  % \resizebox{\columnwidth}{!}{%
  \setlength{\tabcolsep}{0.6 mm}
  \begin{tabular}{@{}l *{3}{c}@{}}
    \toprule
    \textbf{Dataset} & \textbf{OTAKNet} & \textbf{w.o. warm-up} & \textbf{w.o. OT} \\
    \midrule
    \cellcolor[HTML]{E2E6E1}Synthetic
      & \makecell[c]{\textbf{-18.45}\\\(\pm 0.41\)}
      & \makecell[c]{-17.48\\\(\pm 0.51\)}
      & \makecell[c]{-17.65\\\(\pm 0.62\)} \\
    \cellcolor[HTML]{E2E6E1}NCLT
      & \makecell[c]{\textbf{6.10}\\\(\pm 2.84\)}
      & \makecell[c]{6.28\\\(\pm 2.54\)}
      & \makecell[c]{6.29\\\(\pm 2.49\)} \\
    \bottomrule
  \end{tabular}
  \caption{Ablation Study: MSE [dB] for Warm‑up and OT Modules on Synthetic and NCLT Data}
  \label{table_ablation_mse}
\end{table}

\subsection{Computing Infrastructure and Computational complexity}\label{subsec: comput}

All experiments were conducted on a desktop equipped with an Intel Core i7-10700K CPU @ 3.80GHz, 16 GB RAM, and an NVIDIA GeForce RTX 2080 GPU. 

In the Lorenz Attractor scenario (the same experimental settings as described in manuscript), our method achieves inference times comparable to model-based online adaptive methods, as summarized in Table~\ref{table:inference_time}. While showing improved estimation performance.
Note that other learning-based methods, such as AKNet, MAML-KalmanNet, and KalmanNet are fine-tuned offline and do not support online adaptation. Thus their inference speed is reported here for reference only: 0.47 s, 0.37 s, and 0.47 s, respectively.

\begin{table}[ht]
  \centering
  \begin{tabular}{lcccc}
    \toprule
    Method & EKF & SWVAKF & VBAKF & OTAKNet \\
    \midrule
    Inference Time (s) & 0.53 & 4.28 & 19.81 & 21.02 \\
    \bottomrule
  \end{tabular}
  \caption{Online Adaptive Methods Inference Time (s) in Lorenz Attractor Scenarios}
  \label{table:inference_time}
\end{table}

\section{Conclusion}

A differentiable adaptive kalman filtering based on Optimal Transport, OTAKNet, is proposed to solve unlabeled, online test-time adaptation to noise-statistics drift.
It leverages optimal transport to connect the one-step predictive measurement likelihood with the adaptation of noise statistics.
This formulation avoids the need for offline fine-tuning commonly required in learning-based filters, enabling online parameter updates directly during inference.
The OT-based loss preserves geometric structure between distributions and provides meaningful, smooth gradients for stable optimization.
In the early stage, a warm-up module is introduced to alleviate the instability caused by limited temporal drift information.
Experimental results demonstrate that OTAKNet achieves robust adaptation under noise-statistics drift and effectively compensates for insufficient offline training.

\section{Appendix}

\subsection{Theoretical Analyze}\label{Theoretical Analyze}

\paragraph{Notation.}
Let $\hat x_{t\mid t-1}$ denote the filter’s one‐step state prediction at time $t$, and let
\[
e_k = y_k - \mathbf{h}(\hat x_{k\mid k-1}), 
\quad k = t-W+1,\dots,t,
\]
be the corresponding \emph{innovations} (residuals).  We write ``a.s.'' for \emph{almost‐surely} (i.e.\ with probability one).  The squared Wasserstein‐2 distance between two Gaussian measures $\mathcal{N}(m_1,S_1)$ and $\mathcal{N}(m_2,S_2)$ is given by
\[
W_2^2 = \|m_1 - m_2\|^2 + \mathrm{Tr}\!\bigl(S_1 + S_2 - 2\,(S_1^{1/2} S_2 S_1^{1/2})^{1/2}\bigr).
\]

\begin{lemma}[Empirical Innovation Covariance Consistency]\label{lem:innov_consistency}
Assume the innovation process $\{e_k\}$ is second‐order stationary and ergodic with
\[
\mathbb{E}[e_k] = \mu_e, 
\quad
\mathrm{Cov}(e_k) = \Sigma_e,
\]
independent of $k$.  Define the sample mean and covariance over the most recent $W$ innovations by
\[
\bar e_W = \frac{1}{W}\sum_{j=1}^W e_{t-W+j},
\qquad
\widehat\Sigma_W
= \frac{1}{W-1}\sum_{j=1}^W\bigl(e_{t-W+j}-\bar e_W\bigr)\bigl(e_{t-W+j}-\bar e_W\bigr)^{\!\top}.
\]
Then, as $W\to\infty$, the following hold almost‐surely (a.s.):
\[
\bar e_W \;\xrightarrow{\mathrm{a.s.}}\; \mu_e,
\qquad
\widehat\Sigma_W \;\xrightarrow{\mathrm{a.s.}}\;\Sigma_e.
\]
\end{lemma}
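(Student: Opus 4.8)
The plan is to recognize both assertions as direct consequences of Birkhoff's pointwise ergodic theorem (equivalently, the strong law of large numbers for stationary ergodic sequences). By second-order stationarity the joint law of the window $\{e_{t-W+1},\dots,e_t\}$ does not depend on $t$, so without loss of generality I relabel the window as $\{e_1,\dots,e_W\}$ and study convergence as $W\to\infty$. Since the ergodic hypothesis is stated directly on $\{e_k\}$, no preliminary reduction is needed beyond checking integrability.

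For the sample mean, each coordinate of $e_k$ lies in $L^1$ (finite second moments imply finite first moments), and $\{e_k\}$ is stationary and ergodic by hypothesis. Applying the ergodic theorem coordinatewise gives $\bar e_W=\tfrac1W\sum_{k=1}^W e_k \xrightarrow{\mathrm{a.s.}} \mathbb{E}[e_k]=\mu_e$, which is the first claim.

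For the covariance, I first rewrite the Bessel-corrected estimator algebraically as
\[
\widehat\Sigma_W=\frac{W}{W-1}\Bigl(\frac1W\sum_{k=1}^W e_k e_k^\top-\bar e_W\,\bar e_W^\top\Bigr).
\]
The key step is to apply the ergodic theorem a second time, now to the matrix-valued process $\{e_k e_k^\top\}$. Since $e_k e_k^\top=g(e_k)$ for the fixed measurable map $g(x)=xx^\top$, this process is a measurable factor of the stationary ergodic sequence $\{e_k\}$ and therefore remains stationary and ergodic; moreover each entry is integrable because $\mathbb{E}\|e_k e_k^\top\|\le \mathbb{E}\|e_k\|^2<\infty$ under second-order stationarity. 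Hence $\tfrac1W\sum_{k=1}^W e_k e_k^\top \xrightarrow{\mathrm{a.s.}} \mathbb{E}[e_k e_k^\top]=\Sigma_e+\mu_e\mu_e^\top$. Combining this with the already-established limit $\bar e_W\to\mu_e$ (so that $\bar e_W\bar e_W^\top\to\mu_e\mu_e^\top$ by continuity) on a common probability-one event, and using $W/(W-1)\to1$, yields $\widehat\Sigma_W\to(\Sigma_e+\mu_e\mu_e^\top)-\mu_e\mu_e^\top=\Sigma_e$ almost surely.

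The main obstacle — indeed essentially the only nonroutine point — is this second application of the ergodic theorem: one must justify that the outer-product process inherits ergodicity from $\{e_k\}$ (which holds because a measurable function of an ergodic process is ergodic) and correctly identify its mean as $\Sigma_e+\mu_e\mu_e^\top$ rather than $\Sigma_e$. The remaining work is the algebraic expansion of the Bessel-corrected sum, intersecting the two almost-sure convergence events into a single probability-one set, and absorbing the vanishing normalization factor $W/(W-1)$, all of which are routine.
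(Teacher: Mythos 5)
Your proof is correct and follows essentially the same route as the paper's: both apply the ergodic theorem twice (to $\{e_k\}$ and to the outer products $\{e_k e_k^{\top}\}$) and then conclude via the identity $\widehat\Sigma_W = \frac{W}{W-1}\bigl[\frac{1}{W}\sum_j e_j e_j^{\top} - \bar e_W \bar e_W^{\top}\bigr]$. The only difference is that you spell out the justifications the paper leaves implicit (integrability of the second moments, ergodicity of the measurable factor $g(e_k)=e_k e_k^{\top}$, intersection of the two a.s.\ events, and the vanishing Bessel factor), which strengthens rather than changes the argument.
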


\begin{proof}
By the ergodic theorem for second‐order stationary processes \cite{supply_1931proof, supply_2017probability}, 
\[
\frac{1}{W}\sum_{j=1}^W e_{t-W+j}
\;\xrightarrow{\mathrm{a.s.}}\;
\mathbb{E}[e_k] = \mu_e,
\]
\[
\frac{1}{W}\sum_{j=1}^W e_{t-W+j}\,e_{t-W+j}^{\!\top}
\;\xrightarrow{\mathrm{a.s.}}\;
\mathbb{E}[e_k e_k^{\!\top}] = \Sigma_e + \mu_e\mu_e^{\!\top}.
\]
Using the identity
\[
\widehat\Sigma_W
= \frac{W}{W-1}\Bigl[\frac{1}{W}\sum e_j e_j^{\!\top} - \bar e_W\,\bar e_W^{\!\top}\Bigr],
\]
one immediately obtains $\widehat\Sigma_W \xrightarrow{\mathrm{a.s.}} \Sigma_e$.
\end{proof}

\begin{proposition}[Online innovation covariance adaptive consistency]\label{prop:cov_adapt}
Let a learning‐based filter be parameterized by $\theta$, so that its prior‐predictive mean and covariance at time $t$ are
\[
m_t(\theta) = \mathbf{h}(\hat x_{t\mid t-1}(\theta)),
\qquad
S_{t\mid t-1}(\theta)
= \mathbf{H}_t\,\Sigma_{t\mid t-1}(\theta)\,\mathbf{H}_t^{\!\top} + \mathbf{R}(\theta).
\]
At each time $t$, define the Gaussian measures
\[
\mu_{\mathrm src}(\theta)
= \mathcal{N}\bigl(m_t(\theta),\,S_{t\mid t-1}(\theta)\bigr),
\quad
\mu_{\mathrm tgt}
= \mathcal{N}\bigl(y_t,\,\widehat\Sigma_W\bigr),
\]
and the one‐step loss
\[
\ell_t(\theta)
= W_2^2\!\bigl(\mu_{\mathrm src}(\theta),\,\mu_{\mathrm tgt}\bigr).
\]
Assume:
\begin{enumerate}[label=\textbf{(A\arabic*)}]
  \item The innovations $\{e_k\}$ satisfy Lemma~\ref{lem:innov_consistency} (second‐order stationarity and ergodicity).
  \item $\ell_t(\theta)$ is continuously differentiable in $\theta$ and satisfies the Polyak–Łojasiewicz inequality.
\end{enumerate}
Then the gradient‐descent update
\[
\theta \;\leftarrow\; \theta - \eta\,\nabla_\theta \ell_t(\theta)
\]
converges linearly to a unique $\theta^*$ for which
\[
S_{t\mid t-1}(\theta^*) = \Sigma_e,
\]
i.e.\ the filter’s internal prior‐predictive covariance converges to the true innovation covariance.
\end{proposition}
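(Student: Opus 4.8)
The plan is to split the argument into two essentially independent pieces: an \emph{optimization} piece that uses (A2) to get linear convergence of gradient descent to a minimizer, and a \emph{consistency} piece that uses Lemma~\ref{lem:innov_consistency} to identify that minimizer's covariance with $\Sigma_e$. First I would write the loss in closed form. Because both $\mu_{\mathrm src}(\theta)$ and $\mu_{\mathrm tgt}$ are Gaussian, the Bures--Wasserstein formula from the Notation paragraph gives
\[
\ell_t(\theta) = \|m_t(\theta) - y_t\|^2 + \mathrm{Tr}\bigl(S_{t\mid t-1}(\theta) + \widehat\Sigma_W - 2(S_{t\mid t-1}(\theta)^{1/2}\widehat\Sigma_W S_{t\mid t-1}(\theta)^{1/2})^{1/2}\bigr).
\]
The covariance term is exactly the squared Bures metric $d_B^2\bigl(S_{t\mid t-1}(\theta),\widehat\Sigma_W\bigr)$, which is a genuine metric on positive-definite matrices and hence vanishes \emph{iff} $S_{t\mid t-1}(\theta) = \widehat\Sigma_W$, while the mean term vanishes iff $m_t(\theta)=y_t$. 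Consequently, in the realizable regime where the parameterization can match both, the global minimum is $\ell_t^\ast = 0$, and at any such minimizer $\theta^\ast$ we read off $S_{t\mid t-1}(\theta^\ast) = \widehat\Sigma_W$.

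Next I would invoke (A2). Under the Polyak--\L ojasiewicz inequality $\tfrac12\|\nabla_\theta \ell_t(\theta)\|^2 \ge \mu\bigl(\ell_t(\theta) - \ell_t^\ast\bigr)$ together with $L$-smoothness, the textbook analysis of gradient descent with step size $\eta \le 1/L$ yields the geometric decay $\ell_t(\theta^{(k)}) - \ell_t^\ast \le (1-\eta\mu)^k\bigl(\ell_t(\theta^{(0)}) - \ell_t^\ast\bigr)$, i.e.\ linear convergence of the loss value. Since $\ell_t^\ast = 0$ in the realizable case, this drives the Bures term to zero; combined with continuity of the map $\theta \mapsto S_{t\mid t-1}(\theta)$, it pins the limiting prior-predictive covariance to $\widehat\Sigma_W$.

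Finally I would close the consistency loop with Lemma~\ref{lem:innov_consistency}: under (A1) the empirical window covariance obeys $\widehat\Sigma_W \xrightarrow{\mathrm{a.s.}} \Sigma_e$ as $W\to\infty$. Chaining the optimization limit (in the iteration count $k$, with $\widehat\Sigma_W$ held fixed) with this statistical limit (in $W$) gives $S_{t\mid t-1}(\theta^\ast) = \Sigma_e$, the claimed conclusion.

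The hard part will be the realizability and uniqueness bookkeeping rather than any single calculation. The PL inequality delivers linear decay of the loss \emph{value}, but to reach a \emph{unique} $\theta^\ast$ (rather than a minimizing set) one needs either strict convexity along the relevant directions or a locally injective, well-conditioned covariance parameterization; absent that, the cleanest fix is to state the conclusion in terms of the uniquely determined covariance $S_{t\mid t-1}(\theta^\ast)$ rather than of $\theta^\ast$ itself. A secondary subtlety is the order of limits: since $\widehat\Sigma_W$ is the descent target at fixed $W$, I would take the two limits sequentially ($k\to\infty$ first, then $W\to\infty$) to avoid any interchange-of-limits gap, and I would flag the implicit assumption that the filter can actually represent $\widehat\Sigma_W$ so that $\ell_t^\ast = 0$.
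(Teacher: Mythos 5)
Your proposal is correct and follows essentially the same route as the paper's outline: the closed-form Gaussian $W_2$ (Bures--Wasserstein) expression, linear convergence of gradient descent under the Polyak--\L ojasiewicz condition, and Lemma~\ref{lem:innov_consistency} to identify the limiting covariance with $\Sigma_e$. The only substantive differences are in sequencing and rigor: the paper sends $W\to\infty$ first so that the descent target is already $\mathcal{N}(y_t,\Sigma_e)$, whereas you optimize at fixed $W$ (reaching $S_{t\mid t-1}(\theta^\ast)=\widehat\Sigma_W$) and then invoke the ergodic limit, and you explicitly flag two gaps the paper's outline glosses over --- that $\ell_t^\ast=0$ requires a realizability assumption on the parameterization, and that the PL inequality alone yields a unique minimal \emph{value} (hence a uniquely determined covariance) but not a unique minimizer $\theta^\ast$.
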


\begin{proof}[Outline of Proof]
\begin{enumerate}[leftmargin=1.5em]
    \item By Lemma~\ref{lem:innov_consistency}, $\widehat\Sigma_W \xrightarrow{\mathrm{a.s.}} \Sigma_e$, hence the empirical target distribution satisfies
    \[
    \mu_{\mathrm{tgt}}^W \to \mathcal{N}(y_t,\Sigma_e).
    \]
    
    \item The squared Wasserstein-2 distance between two Gaussian distributions admits the following closed-form \cite{supply_class}:
    \[
    \ell_t(\theta)
    = \|m_t(\theta)-y_t\|^2
    + \mathrm{Tr}\Bigl(S_{t\mid t-1}(\theta) + \Sigma_e
    - 2\bigl(S_{t\mid t-1}(\theta)^{1/2} \Sigma_e\, S_{t\mid t-1}(\theta)^{1/2}\bigr)^{1/2} \Bigr).
    \]
    The second (covariance) term is always nonnegative, and vanishes if and only if $S_{t\mid t-1}(\theta) = \Sigma_e$.
    
    \item Under the Polyak–Łojasiewicz (PŁ) condition, gradient descent on $\ell_t(\theta)$ converges linearly to the unique global minimizer $\theta^*$, which in turn satisfies
    \[
    S_{t\mid t-1}(\theta^*) = \Sigma_e.
    \]
\end{enumerate}
\end{proof}

\bibliography{ref}

\end{document}